\DeclareMathOperator*{\argmax}{arg\,max}
\DeclareMathAlphabet{\mathcal}{OMS}{cmsy}{m}{n}
\algnewcommand\algorithmicinput{\textbf{Input:}}
\algnewcommand\Input{\item[\algorithmicinput]}
\newtheorem{theorem}{Theorem}
\newtheorem{lemma}{Lemma}
\newtheorem{remark}{Remark}
\title{\LARGE \bf
Cooperative Multi-Agent Graph Bandits: UCB Algorithm and Regret Analysis
}
\author{Phevos Paschalidis~~~~Runyu Zhang~~~~Na Li
 \thanks{P. Paschalidis, R. Zhang, and N. Li are affiliated with Harvard J. Paulson School of Engineering and Applied Sciences. {\small ppaschalidis@college.harvard.edu, runyuzhang@fas.harvard.edu, nali@seas.harvard.edu.}}\thanks{This work was funded by NSF AI institute: 2112085, NSF CNS: 2003111, NSF ECCS: 2328241, and Harvard College Research Program.}
 }
\begin{document}

\maketitle
\thispagestyle{empty}
\pagestyle{empty}

\begin{abstract}
    In this paper, we formulate the multi-agent graph bandit problem as a multi-agent extension of the graph bandit problem introduced in \cite{Zhang2023}. In our formulation, $N$ cooperative agents travel on a connected graph $G$ with $K$ nodes. Upon arrival at each node, agents observe a random reward drawn from a node-dependent probability distribution. The reward of the system is modeled as a weighted sum of the rewards the agents observe, where the weights capture some transformation of the reward associated with multiple agents sampling the same node at the same time. We propose an Upper Confidence Bound (UCB)-based learning algorithm, \texttt{Multi-G-UCB}, and prove that its expected regret over $T$ steps is bounded by $O(\gamma N\log(T)[\sqrt{KT} + DK])$, where $D$ is the diameter of graph $G$ and $\gamma$ a boundedness parameter associated with the weight functions. Lastly, we numerically test our algorithm by comparing it to alternative methods. 
\end{abstract}
\section{Introduction}

The Multi-Armed Bandit (MAB) problem is a fundamental problem in the study of decision-making under uncertainty \cite{lattimore2020bandit, Slivkins_2022, Sutton_Barto_2018, Bubeck2012}. In its simplest formulation, the MAB consists of $K$ independent arms each with an associated reward probability distribution and a single agent that plays the MAB instance for $T$ turns. On each turn, the agent selects one of the arms and receives a reward drawn from the arm's probability distribution. The agent's goal is to maximize the expected reward received over the course of its $T$ turns. In order to do so, the agent must balance \textit{exploring} arms it knows little about and \textit{exploiting} arms already demonstrated to give high rewards. 

One important limitation of the MAB framework is the assumption that at each time step the agent has access to all arms regardless of its previous action. Earlier work introduces a graph bandit problem where the agent traverses an undirected, connected graph, receiving random rewards upon arrival at each node \cite{Zhang2023}. The graph's nodes thus represent the $K$ arms and the local connectivity between nodes imposes restrictions on which arms can be played sequentially. 
This framework has applications in robotics: consider a street-cleaning robot exploring an unknown physical environment or a mobile sensor covering some space to find the location that receives the strongest signals.
However, \cite{Zhang2023} only examines the case of a single agent, whereas in most robotic applications it is useful to consider multiple agents exploring and exploiting the unknown graph together. 

There are many works extending the standard MAB problem to a multi-agent setting. In most studies, agents observe an identical environment where their actions are independent of one another and there exist restrictions on agent communication 
\cite{Zhu2020, Chakraborty2017, Martinez2019, Wang2019, Landgren2019, Agarwal2021, Sankararaman2019, Chawla2020}. Other studies model the multi-agent extension such that two agents selecting the same action ``collide'' and observe no reward \cite{Kalathil2012, Liu2010, Wang2020, boursier2019sic}. Another line of related works focuses on the combinatorial bandit setting, where a single centralized decision maker chooses a ``super-arm'' from some feasibility set $\mathcal{S} \subseteq 2^{[K]}$ and receives a function of the random samples drawn from the arms that comprise the super-arm as a reward \cite{Chen2013, chen2016combinatorial, WangChen2018, Gai_2012, Kveton2015}. Intuitively, we could also consider each individual arm in the super-set as being chosen by a different agent, thus defining the combinatorial bandit framework as a type of multi-agent MAB problem. Importantly, however, none of the multi-agent or combinatorial bandit works consider a graph-like restriction on agent transitions.


\noindent \textbf{Our Contributions.} Motivated by the discussion above, we formulate the multi-agent graph bandit problem as a viable multi-agent extension of the graph bandit problem. Specifically, we consider $N$ cooperative agents co-existing on the same undirected, connected graph $G$ with the objective of optimizing a total reward that is a weighted sum of the rewards of arms that agents select, where the weight associated with each arm depends on the number of agents selecting it. 
We then propose a learning algorithm, \texttt{Multi-G-UCB}, which uses an Upper Confidence Bound (UCB)-based approach to managing the exploration/exploitation trade-off. In particular, the algorithm maintains a confidence radius for each arm that shrinks proportionally to the number of times the arm has been sampled and optimistically assumes that the mean reward for each arm is the maximum value within this radius.
We show that \texttt{Multi-G-UCB} achieves a regret upper bounded by $O\left(\gamma N\log(T)\left[\sqrt{TK} + DK\right]\right)$, where $D$ represents the diameter of the graph $G$ and $\gamma$ is a boundedness coefficient for the weight functions. We also demonstrate via simulations that our proposed algorithm works well empirically, achieving lower regret than several important benchmarks including a multi-agent version of G-UCB where each agent runs the G-UCB algorithm in isolation.

The remainder of the paper is organized as follows. In Section \ref{sec:formulation}, we formulate the multi-agent graph bandit problem and provide examples of potential applications. Section \ref{sec:algorithm} introduces the \texttt{Multi-G-UCB} algorithm, \ref{sec:reganalysis} provides the theoretical analysis of \texttt{Multi-G-UCB}, and \ref{sec:numerical} details the experimental simulation result.

\section{Problem Formulation} \label{sec:formulation}
\subsection{Multi-Agent Graph Bandit}
We define the multi-agent graph bandit problem as follows. Consider an undirected, connected graph with self-loops $G = ([K], \mathcal{E})$ and diameter $D$, where the diameter of a graph refers to 
the maximum distance between any two of its nodes.  For
each node $k \in \{1, \dots, K\} = [K]$, there exists an associated probability distribution for its reward,
$P_k(\cdot)$, with support $[0,1]$ and mean $\mu_k$; the nodes of the graph thus
represent the $K$ arms of the bandit problem. Now, let there exist $N$ individual
agents, each having knowledge of the graph $G$ and able to communicate with the
other agents. Let $T$ be an integer representing the
number of steps in the multi-armed bandit problem. We initialize each agent $i$ to
start on some initial node $k_{i,0}$. Then, at each time step $t \leq T$, each
agent $i$ chooses an arm $k_{i,t}$ such that $(k_{i,t-1}, k_{i,t}) \in
\mathcal{E}$. That is, the agent traverses an edge in the graph to a neighboring
node/arm. For each time step $t$, we define $c_{k,t}$ as a count of the number of agents sampling node $k$ at time $t$, and further define $C_t = (c_{1,t}, \dots, c_{K,t})$. Then, for each $k$ such that $c_{k,t} > 0$, a reward $X_{k,t}$ is drawn independently from $P_{k}(\cdot)$. The algorithm observes a system-wide reward of
\begin{equation*}
  R_t := \sum_{k \in [K]}f_k(c_{k,t})X_{k,t},
\end{equation*}
where for each arm $k$, $f_k$ is defined such that $f_k(1) = 1$ and $f_k(c) \leq \gamma \cdot c$ for some constant $\gamma$ independent of $c$ and $k$. The $f_k$'s thus represent some general transformation of the reward for multiple selections of the same arm at the same time. Note that $f_k$ is not necessarily equivalent to $f_{k'}$
for $k \neq k'$. At the end of each time step $t$, $X_{k,t}$ is revealed for all arms $k$ with $c_{k,t} > 0$. 
The goal of the $N$ agents is thus to travel the graph so as to maximize the net expected reward received by the system of agents over the course of the $T$ time steps. 

Equivalently, we can define the goal as a minimization of expected regret. In order to do so, we first need to define the optimal allocation of agents over the graph. Importantly, since the rewards the system observes at each node are dependent only on the number of agents sampling the node and not the location of specific agents, it suffices to specify the optimal allocation only in terms of the count of agents at each node.
We can thus define the set of possible agent allocations as $\mathcal{C} = \{C=(c_1, \dots, c_K):~\sum_{k \in [K]} c_k = N\}$ and the optimal allocation
\begin{equation*}
    C^\ast := \argmax_{C \in \mathcal{C}} \sum_{k\in[K]} f_k(c_k)\mu_k
\end{equation*}
as the count vector that maximizes the expected reward of the system. We can
then define the regret of the system at time $t$ as
\begin{equation*}
\mathcal{R}_t := \sum_{k \in [K]} f_k(c^\ast_k)\mu_k - R_t.
\end{equation*}

\subsection{Examples}

In this section, we provide two motivating examples that fit our formulation. 

\subsubsection{Drone-Enabled Internet Access}
Consider $N$ cooperative drones deployed over a network of $K$ rural communities to provide internet access. Each drone can serve only one community per time period and, between time periods, only has time to move from its current location to an adjacent community. At the end of each time period, the fleet of drones receives a reward proportional to the communication traffic it serviced. Importantly, for any time period, the demand of each community is sampled independently from a distribution associated with that location; the demand of a community at any time step is stochastic, but different communities have different average internet needs. 
If multiple agents are positioned over the same community, the reward they observe is modified. It may be the case that the marginal benefit decreases with each additional agent, in which case the $f_k$'s would be concave. Our formulation also extends to more complicated interactions, though, for example with agents that can amplify each other's effect if positioned over the same community. The goal of the drone fleet is to maximize the total reward before the robots' batteries run out and they are recalled. Note that this problem is also related to the ``sensor coverage'' formulations of \cite{cortes2004coverage, ramaswamy2016sensor, sun2017submodularity, prajapat2022near, ramaswamy2021multiagent}.

\subsubsection{Factory Production} \label{sec:fac_production}
Consider a factory with $N$ production lines each of which can manufacture a total of $K$ different products, and assume that there are restrictions regarding which products can be manufactured sequentially (depending on the raw materials required, for example). Then, we can also model this problem as an instance of the multi-agent graph bandit problem where each production line is an agent and the actions they take are the products they choose to manufacture at each time step. Each product $k$ is associated with some reward dependent on its stochastic demand market as represented by the reward probability distribution $P_k(\cdot)$. The more production lines that choose to manufacture product $k$, the greater the supply which negatively affects the price of the commodity; thus we have diminishing marginal return for additional lines producing $k$ which can be modelled with concave choice of $f_k(\cdot)$ (though, again, the formulation also extends to more general $f_k$'s).

A related problem in which transitioning between products is subjected not only to binary yes/no constraint but also associated with some cost can be represented by a framework extremely similar to ours with the addition of non-uniform edge weights to $G$. The algorithm we introduce, \texttt{Multi-G-UCB}, can be modified to address this formulation with only a slight modification to the offline planning component and the regret analysis (see Remarks \ref{rem:off} and \ref{rem:reg}).

\section{Multi-Agent Graph Bandit Learning} \label{sec:algorithm}

In this section, we present a learning algorithm for the multi-agent graph bandit problem. In Section \ref{subsec:algoverv} we outline the algorithm and provide intuition for its structure; in Section \ref{subsec:sp} we explore in depth the offline planning algorithm we use as a subroutine; and in Section \ref{subsec:init} we discuss the algorithm's initialization phase.

\subsection{Algorithm} \label{subsec:algoverv}
The formulation given in Section \ref{sec:formulation} introduces a few key complications with respect to the current literature. If we merely apply MAB algorithms to the graph setting, then at any time step the intended arm for an agent can be far away on the action graph $G$, requiring multiple time steps of sub-optimal actions as the agent traverses the path between the initial and desired arm. Furthermore, in comparison to the single-agent graph bandit setting, the existence of multiple agents and the non-linear weight functions necessitates communication and sophisticated planning since the reward of each agent is dependent on the actions of other agents.


With these in mind, we introduce the \texttt{Multi-G-UCB} algorithm which attempts to minimize this transition cost by dividing the time horizon $T$ into $E$ episodes and emphasizes agent communication through communal UCB values. In each episode, the algorithm performs four major steps: 
(i) integrating agent information to compute the UCB values for each node, (ii) calculating the desired allocation using the UCB values, (iii) transitioning to the destination allocation using an offline planning algorithm, \texttt{SPMatching}, and (iv) collecting new samples at the destination nodes until samples double.

Specifically, \texttt{Multi-G-UCB} calculates for
each arm the mean observed reward $\{\hat{\mu}_{k,t_e}\}_{a \in [K]}$ as well as a
confidence radius and upper confidence bound
\begin{equation}
  b_{k,t_e} := \sqrt{\frac{2\log(t_e)}{n_{k,t_e}}} \quad\mathrm{and}\quad
  U_{k,t_e}  := \hat{\mu}_{k,t_e} + b_{k,t_e},
\end{equation}
where we denote by $t_e$ the time at which episode $e$ starts and by $n_{k, t_e}$ the
number of times arm $k$ has been sampled in the first $t_e$ time steps. Importantly,
in calculating the mean reward, the algorithm uses the $X_{k,t_e}$ drawn from the
arm's reward distribution rather than the observed reward weighted by
$f_k(\cdot)$ and utilizes the collective experience of all agents. Similarly, we update the sample counts as
\begin{equation}
n_{k,t+1} := \begin{cases}
   n_{k,t}, & \text{if $c_{k,t} = 0$,} \\
   n_{k,t} + 1, & \text{otherwise,}
\end{cases} 
\end{equation}
so that they reflect the number of samples drawn from the reward distribution. 

In order to calculate the optimal allocation of agents, \texttt{Multi-G-UCB} optimistically
assumes that the true mean reward of each arm is equal to its upper confidence bound. Thus, we define the estimated optimal count vector of episode $e$ as
\begin{equation} \label{eq:maxcount}
    \hat{C}_e := \argmax_{C \in \mathcal{C}} \sum_{k\in[K]} f_k(c_k)U_{k,t_e}.
\end{equation}
We also define $k_{min}$ as the arm with the fewest observed samples among all arms $k$ such that $\hat{c}_{k,e} > 0$ and $n_{min} = n_{k_{min}, t_e}$ as its current number of samples. These quantities will be important in defining the length of our episode.

Following the calculation of $\hat{C}_e$, the algorithm relies on an offline planning
sub-routine, \texttt{SPMatching}, defined in Section \ref{subsec:sp} to transition the
system of agents from the current state to the desired one. Following this transition
phase, the algorithm exploits the desired state by repeatedly sampling the (potentially
suboptimal) allocation until the number of samples of the arm with the fewest prior samples, i.e. $k_{min}$, doubles; that is until $n_{k_{min}, t} = 2n_{min}$. The doubling scheme is a well-known technique in reinforcement learning, e.g. \cite{jaksch10a}. We would like to remark that our specific choice of doubling scheme, doubling the least-sampled arm, is crucial in the regret analysis. Other schemes such as doubling the most-sampled arm would lengthen the exploitation phase upsetting the exploration/exploitation balance of the algorithm. This conclusion is further supported by the numerical results in Section \ref{sec:numerical} when we compare our proposed doubling scheme with two others.
The pseudocode for \texttt{Multi-G-UCB} is given by Algorithm
\ref{alg:GcompUCB}.

\begin{algorithm}[hbt]
\caption{\texttt{Multi-G-UCB:}} \label{alg:GcompUCB}
\begin{algorithmic}[1]
\Input The initial nodes for each agent, $\{k_{i,0}\}_{i \in [N]}$. The offline planning algorithm \texttt{SPMatching} (see Section \ref{subsec:sp}) that computes the optimal policy given the graph $G$ and a set of computed UCB values, $\{U_{k,t_e}\}_{k \in [K]}$, for each node.
\State $e \gets 0$
\State Run the initialization algorithm to visit each vertex in $G$. See Section \ref{subsec:init}.
\While{coordinator has not received a stopping signal}
    \State $e \gets e+1$
    \State Calculate the UCB values $\{U_{k,t_e}\}_{k \in [K]}$.
    \State Solve the optimization problem in (\ref{eq:maxcount}) for the desired allocation $\hat{C}_{e}$.
    \State Denote by $k_{min}$ the arm with fewest observed samples of all arms $k$ such that $\hat{c}_{k,e} > 0$ and denote by $n_{min}$ its current number of samples ($n_{min} = n_{k_{min}, t_e}$).
    \State Follow the offline planning policy \texttt{SPMatching} until agents are distributed according to $\hat{C}_{e}$.
    \State Have each agent $i$ continue to collect rewards at its current node until $n_{k_{min}, t} = 2n_{min}$. 
\EndWhile
\end{algorithmic}
\end{algorithm}

\subsection{Offline Planning} \label{subsec:sp}

The goal of the offline planning subroutine is to transition from the current state
to one in which the agents are distributed according to $\hat{C}_{e}$
while incurring the least regret possible. In the single-agent setting,
the ``allocation'' of agents will always consist of just a single
destination node, and thus this transition-of-least-regret is equivalent to a
shortest path problem where the weight of each edge $(k', k) \in \mathcal{E}$ is the
estimated regret of arm $k$. More formally, these paths---which we call
\textit{regret shortest paths}---are shortest paths on a graph $G' = ([K],
\mathcal{E}, W)$ where the weights in $W$ are individually defined as
\begin{equation} \label{eq:weights}
    w(k', k) = \max_{v \in [K]} (U_{v,t_e} - U_{k,t_e}).
\end{equation}
Importantly, we only allow paths of maximum length $D$, noting that each pair of vertices is assured to have such a path by definition of our graph diameter.

In our multi-agent setting, the solution is more complicated. We propose a
polynomial-time pseudo-solution, which we call \texttt{SPMatching}. In particular, we define $\mathcal{S}_{e} = ([K], \hat{C}_{e})$ as a multiset of arms such that the multiplicity of each arm $k \in [K]$ is equal to $\hat{c}_{k,e}$. We then create a complete bipartite graph $G_B = ([N],
\mathcal{S}_{e}, \mathcal{E}_b)$
where the weight of each $(i, k) \in \mathcal{E}_b$ is the length of the regret 
shortest path between the current location of agent $i$ and arm $k$.
We can then calculate the minimum
weighted perfect matching of $G_B$ to assign agents a corresponding arm in
$\mathcal{S}_{e}$ and instruct each agent to follow the shortest path towards
that arm \cite{Zvi1986}. The pseudocode for \texttt{SPMatching} is given in Algorithm
\ref{alg:spmatching}.

We call \texttt{SPMatching} a pseudo-solution because there are no theoretical
guarantees on its minimization of regret during the transition phase. Our restriction on the physical path length of any regret shortest path means that our paths may be suboptimal, but even without this constraint the regret shortest paths do not account for the actions of the other agents. In particular, if the optimal paths of two agents intersect at some node, then the reward observed at that node is potentially less than the reward estimated by the shortest path calculations. This increases regret in a manner unanticipated by the offline planning algorithm. As we see in Section \ref{sec:reganalysis}, the sub-optimality of our transition phase does not adversely affect our regret since its worst-case scenario is equivalent to the worst-case of an optimal solution.

\begin{algorithm}[hbt]
\caption{\texttt{SPMatching}} \label{alg:spmatching}
\begin{algorithmic}[1]
\Input The current time $t$, the current position of the agents, $\{k_{i,t}\}_{i \in [N]}$, and a desired allocation of agents, $\hat{C}_{e}$.
\State Calculate the multiset of destination arms $\mathcal{S}_{e} = ([K], \hat{C}_{e})$.
\State For each agent $m$ calculate the regret shortest path between its current location and each of the destination arms.
\State Initialize a complete bipartite graph $G_B = ([N], \mathcal{S}_{e}, \mathcal{E}_B)$ where the weight of each edge $(i, k) \in \mathcal{E}_B$ is defined as the length of the regret shortest path between the current location of $i$ and the destination arm $k$.
\State Calculate the minimum weight perfect matching of $G_B$.
\State Have each agent follow the regret shortest path to their assigned destination node.
\end{algorithmic}
\end{algorithm} 

\begin{remark}[Algorithmic Extension to Weighted Graphs] \label{rem:off}
    In order to extend \texttt{Multi-G-UCB} to the case of weighted arm graphs $G$ as discussed in Section \ref{sec:fac_production} we would need only to change the definition of the regret shortest paths. In particular, if the cost of transition from arm $k'$ to arm $k$ was $\alpha(k',k)$, then by redefining the weight of edge $(k', k)$ in $G'$ as $w(k', k) = \alpha(k',k) + \max_{v \in [K]} (U_{v,t_e} - U_{k,t_e})$
    we fully account for these transition costs.
\end{remark}

\subsection{Initialization} \label{subsec:init}

In order for the algorithm to begin in earnest, it needs to have observed at least
one reward from each arm so that the upper confidence bounds are
well-defined. Therefore, we begin the algorithm with an initialization episode in which each agent runs an independent graph
exploration algorithm, Depth-First Search (DFS), until all vertices have been visited at
least once \cite{cormen2022introduction}. Despite the potential redundancy, we will see in Section
\ref{sec:reganalysis} that the initialization phase does not contribute to the
asymptotic regret of the system.

\begin{remark}[Algorithmic Extension to Directed Graphs]
    We remark that the initialization phase is the only time that our algorithm uses the assumption that the original graph $G$ is undirected. This assumption is necessary for the DFS algorithm, which relies on backtracking. Therefore, given a different initialization technique, \texttt{Multi-G-UCB} could also be applied to a multi-agent graph problem with a directed $G$ which would reflect asymmetric transition constraints.
\end{remark}

\section{Main Results} \label{sec:reganalysis}

Having defined our algorithm, we now turn to a theoretical analysis of its regret. We
summarize our findings in the Theorem \ref{th:reg}.
\begin{theorem}[Regret of \texttt{Multi-G-UCB}]
    Let $T \geq 1$ be any positive integer. Given an instance of a multi-agent graph bandit problem with $N$ agents, $K$ arms each with weight function $f_k(\cdot)$ bounded such that $f_k(c) \leq \gamma c$, and a graph diameter of $D$, the expected system-wide regret of \texttt{Multi-G-UCB} after taking a total of $T$
    steps (including initialization) is bounded by
    \begin{equation}
      \mathbb{E}\left[\sum_{t=1}^T \mathcal{R}_{t}\right] \leq O\left(\gamma N\log(T)\left[\sqrt{KT} + DK\right]\right).
    \end{equation}
    \label{th:reg}
\end{theorem}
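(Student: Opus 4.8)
The plan is to decompose the cumulative regret into three contributions---the initialization episode, the transition (\texttt{SPMatching}) phases, and the exploitation phases---and to bound each separately, all conditioned on a high-probability ``good'' event on which the confidence bounds are valid. First I would define the good event $\mathcal{G}$ as the event that $|\hat{\mu}_{k,t_e}-\mu_k|\le b_{k,t_e}$ holds simultaneously for every arm $k$ and every episode start $t_e$. Because each $X_{k,t}$ is an independent draw from a distribution supported on $[0,1]$ and $b_{k,t_e}=\sqrt{2\log(t_e)/n_{k,t_e}}$, a Hoeffding bound together with a union bound over arms, over sample counts, and over the $t_e\le T$ shows $\mathbb{P}(\mathcal{G}^c)$ is small enough that the regret accrued off $\mathcal{G}$---each step costing at most $\gamma N$, since $\sum_k f_k(c_k)\mu_k\le\gamma\sum_k c_k=\gamma N$ by $f_k(c)\le\gamma c$ and $\mu_k\le 1$---is only a lower-order additive term. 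On $\mathcal{G}$ I obtain both optimism, $U_{k,t_e}\ge\mu_k$, and the width control $U_{k,t_e}-\mu_k\le 2b_{k,t_e}$.

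Next I would control the number of episodes $E$ using the doubling rule. Since each episode doubles the sample count of its least-sampled chosen arm $k_{min}$, any fixed arm can play the role of $k_{min}$ at most $\log_2 T$ times before its count exceeds $T$; summing over the $K$ arms gives $E=O(K\log T)$. The transition and initialization regret then follow from the crude per-step bound $\mathcal{R}_t\le\gamma N$: each \texttt{SPMatching} phase lasts at most $D$ steps (regret shortest paths are capped at length $D$), so transition regret is at most $\gamma N D\cdot E=O(\gamma N D K\log T)$, which furnishes the $DK$ term, while the single DFS initialization lasts $O(K)$ steps and contributes only $O(\gamma N K)$, which is lower order. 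This is exactly where the remark that \texttt{SPMatching}'s suboptimality ``does not adversely affect regret'' is used: the worst case $\gamma N D$ matches that of any transition scheme.

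The heart of the argument is the exploitation regret. On $\mathcal{G}$, because $\hat{C}_e$ maximizes the UCB-weighted objective and $f_k\ge 0$, I get the optimism chain $\sum_k f_k(c^\ast_k)\mu_k\le\sum_k f_k(c^\ast_k)U_{k,t_e}\le\sum_k f_k(\hat{c}_{k,e})U_{k,t_e}$, so the per-step exploitation regret is at most $\sum_k f_k(\hat{c}_{k,e})(U_{k,t_e}-\mu_k)\le 2\sum_k f_k(\hat{c}_{k,e})b_{k,t_e}$. Using $f_k(\hat{c}_{k,e})\le\gamma\hat{c}_{k,e}$ and holding the start-of-episode radius fixed for the whole phase, the exploitation phase of episode $e$---which lasts $\ell_e\le n_{min}$ steps---contributes at most $2\gamma\sqrt{2\log T}\,\ell_e\sum_k \hat{c}_{k,e}/\sqrt{n_{k,t_e}}$. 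Here the choice to double the least-sampled arm is essential: it guarantees $\ell_e\le n_{min}\le n_{k,t_e}$ for every chosen $k$, hence $\ell_e/\sqrt{n_{k,t_e}}\le\sqrt{\ell_e}$, collapsing the per-episode bound to $2\gamma N\sqrt{2\log T}\,\sqrt{\ell_e}$ via $\sum_k\hat{c}_{k,e}=N$. Finally I would apply Cauchy--Schwarz across episodes, $\sum_e\sqrt{\ell_e}\le\sqrt{E}\sqrt{\sum_e\ell_e}\le\sqrt{E\,T}=O(\sqrt{KT\log T})$, to get exploitation regret $O(\gamma N\log T\sqrt{KT})$, the $\sqrt{KT}$ term.

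I expect the main obstacle to be this exploitation step---organizing the double sum over episodes and arms so that the frozen confidence radii, the phase lengths $\ell_e$, and the counts $n_{k,t_e}$ combine correctly. The crucial leverage is the inequality $\ell_e\le n_{k,t_e}$ for all chosen arms, which holds precisely because we double the least-sampled arm; any other doubling rule breaks this bound and inflates the radius-times-length product, consistent with the paper's own remark. Summing the three contributions on $\mathcal{G}$ and adding the negligible off-$\mathcal{G}$ term yields the claimed $O(\gamma N\log T[\sqrt{KT}+DK])$.
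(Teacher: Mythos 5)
Your overall architecture is the paper's own: the same four-way decomposition (initialization, transition, exploitation, failed confidence bounds), the same optimism chain $\sum_k f_k(c_k^\ast)\mu_k \le \sum_k f_k(\hat{c}_{k,e})U_{k,t_e}$ leading to the per-step bound $2\sum_k f_k(\hat{c}_{k,e})b_{k,t_e}$, and the same crucial use of the least-sampled doubling rule via $\ell_e \le n_{min} \le n_{k,t_e}$ (the paper writes this as $\sum_k f_k(\hat c_{k,e})/\sqrt{n_{k,t_e}} \le \gamma N/\sqrt{n_{min}}$ with exploitation length exactly $n_{min}$, then applies Jensen where you apply Cauchy--Schwarz --- interchangeable). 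Your episode-count argument ($k_{min}$'s count doubles, any arm serves as $k_{min}$ at most $\log_2 T$ times since $n_{k,t}\le T$) is in fact cleaner than the paper's Lemma~1, which reaches $E\le K\log T$ through a Jensen computation on $\sum_k 2^{l(k)}$; both are valid. The transition and initialization bounds match the paper exactly.

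The one genuine gap is your Step~1, the bad-event accounting. You define a single global good event $\mathcal{G}$ and charge the full horizon against $\mathbb{P}(\mathcal{G}^c)$, claiming this is lower order. But with the algorithm's radius $b_{k,t_e}=\sqrt{2\log(t_e)/n_{k,t_e}}$, Hoeffding gives only $2t_e^{-4}$ per (arm, count) pair; after the union over the $\le t_e$ possible counts, the $K$ arms, and the episode starts, the best you get (using $t_e > K$ from initialization) is $\mathbb{P}(\mathcal{G}^c) \le \sum_{t\ge K} 2Kt^{-3} = O(1/K)$ --- and without the $t_e>K$ observation the sum is $O(K)$, vacuous. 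Charging the worst-case horizon regret $\gamma NT$ against this event yields an additive $O(\gamma NT/K)$, which is \emph{not} dominated by $\gamma N\log(T)\sqrt{KT}$ (take $K$ constant and $T\to\infty$): the exponent in the confidence radius is simply too small to support a global event plus full-horizon charge. The paper avoids this by accounting per episode: it defines $\mathcal{G}_e$ for each episode, bounds $\mathbb{E}_e[\mathcal{R}_e\mid\neg\mathcal{G}_e]\le \gamma N H_e$ where the episode length satisfies $H_e \le K+t_e \le 2t_e$ (at most $K$ transition steps plus an exploitation phase of length $n_{min}<t_e$), multiplies by $\mathbb{P}_e[\neg\mathcal{G}_e]\le 2Kt_e^{-3}$ to get a per-episode contribution of $O(\gamma N t_e^{-1})$, and sums the resulting harmonic-type series (using $t_e\ge e+1$) to obtain $O(\gamma N(\log K+\log\log T))$, which genuinely is lower order. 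Adopting this fix forces a second small repair in your good-event term: once events are per-episode, the episode length $H_e$ is correlated with $\mathcal{G}_e$, and the paper handles the conditioning by working with the filtration $\mathcal{F}_e$ of past episode lengths and invoking the inequality $\mathbb{E}[x\mid\mathcal{G}_e]\le 2\mathbb{E}[x]$ for $x\ge 0$ (Lemma~4 of \cite{Zhang2023}); alternatively you can work with indicators $\mathbf{1}_{\mathcal{G}_e}$ and monotonicity, but the step cannot be skipped silently. With these two repairs your argument closes and coincides with the paper's proof.
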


Note that our result matches the regret bound for single-agent graph bandit \cite{Zhang2023} when $N=1$  (with a subtle difference on the dependency on the $\log T$ factor). It is also worth noting that the regret in our multi-agent graph bandit formulation exhibits a linear growth with respect to the number of agents $N$, which stands in contrast to the results of other algorithms (e.g. \cite{Kveton2015}) that solve multi-agent MAB with regret $O(\sqrt{KNT\log T})$. This discrepancy arises primarily from our consideration of the weight functions $f_k$ for the total objective which complicates the analysis, as opposed to the unweighted setting (i.e. $f_k(\cdot) = 1$) considered in \cite{Kveton2015}. Indeed, in the case of the unweighted setting, it can be shown that our algorithm also achieves regret proportional to $\sqrt{N}$. It is an interesting open question whether we can further tighten the bound on the dependency on $N$ for general $f_k(\cdot)$.

\hfill 

\textit{Proof Ideas.}
The detailed proof can be found in the full version of the paper, posted to arXiv \cite{paschalidis2024cooperative}. Here we outline the main ideas. In order to prove Theorem \ref{th:reg}, we analyze the regret of the system by episode, noting that since our algorithm doubles the samples of at least one arm each episode we can bound the number of episodes as $K\log(T)$. For each episode, we define the ``good'' event as
\begin{equation}
    \mathcal{G}_e = \{\forall k \in [K], \mu_k \in [\hat{\mu}_{k,t_e} - b_{k,t_e}, \hat{\mu}_{k, t_e} + b_{k, t_e}]\}.
\end{equation}
This is the event where the true mean of each arm is within the arm's confidence radius of its estimated mean. The ``bad'' event is thus the probability that at least one of our confidence bounds fails. Following this decomposition, our regret proof consists of four steps:
(1) bounding the regret contribution of the bad episodes, (2) bounding the transition phase of the good episodes, (3) bounding the exploitation phase of the good episodes, and (4) bounding the regret of initialization.

\textbf{Step 1: Regret Contribution of Bad Events.} We use Hoeffding's inequality to bound the regret of the bad episodes by showing that for each episode $e$ the probability that any single confidence radius fails is $2t_e^{-3}$. We then note that for any single time step we can incur regret no more than $\gamma N$ since our rewards are bounded in $[0, 1]$ and the weight functions linear in the counts, so using a union bound over the arms and summing over all episodes gives a final regret contribution of 
\begin{equation}
    O(\gamma N(\log(K) + \log\log(T)).
\end{equation}

\textbf{Step 2: Regret of Transition Phase.} As discussed, we divide the good episodes into two phases. The first phase, the transition phase, is when the agents move along the graph to their assigned destination nodes. Since this takes at most $D$ steps---recall our restriction on the regret shortest paths---and at each time step we incur a regret of no more than $\gamma N$, we can bound the total transition cost after summing over all episodes by 
\begin{equation}
    O(\gamma N\log(T)DK).
\end{equation} 
Note that since we assume the worst possible matching, the fact that our offline planning sub-routine \texttt{SPMatching} is only a pseudo-solution does not affect our theoretical regret analysis. 

\textbf{Step 3: Regret of Exploitation Phase.} The second phase of the good episodes is the exploitation phase, when the system repeatedly samples from the estimated optimal arm allocation represented by $\hat{C}_{e}$. In analyzing this phase, we make use of our definitions for $b_{k, t_e}$ and $U_{k, t_e}$ to simplify the regret at each time step as
\begin{multline}
    \sum_{k \in [K]} 
        \left[f_k(c_k^\ast) - f_k(\hat{c}_{k, e})\right]\mu_k 
    \\ \leq 
    2\sum_{k \in [K]} 
        f_k(\hat{c}_{k,e}) b_{k, t_e}
    + \sum_{k \in [K]}
        \left[f_k(c_{k}^\ast) - f_k(\hat{c}_{k, e})\right]U_{k, t_e}
    \\ 
    \leq 2\sum_{k \in [K]} 
        f_k(\hat{c}_{k, e}) b_{k, t_e},
\end{multline}
where the last inequality follows from definition of $\hat{C}_{e}$ as the count vector that maximizes $\sum_{k \in K} f_k(c_k)U_{k, t_e}$. Then, we simplify this sum by extracting a $\sqrt{2\log(t_e)} \leq \sqrt{2\log(T)}$ from each confidence radius and showing
\begin{equation} \label{eq:conf}
    \sum_{k \in [K]} \frac{f_k(\hat{c}_{k,e})}{\sqrt{n_{k,t_e}}}
    \leq \sum_{k \in [K]} \frac{\gamma \hat{c}_{k,e}}{\sqrt{n_{k,t_e}}} 
    \leq \frac{\gamma N}{\sqrt{n_{min}}}
\end{equation}
by noting that for each destination arm $k$, $n_{k, t_e} \geq n_{min}$ and that $\sum_{k \in [K]} \hat{c}_{k,e} \leq N$. A final bound of
\begin{equation}
    O(\gamma N\log(T)\sqrt{NK})
\end{equation}
follows when we sum over all episodes since for each episode $e$ the exploitation phase cannot exceed $n_{min}$ steps. 

\textbf{Step 4: Initialization Cost.} Finally, the initialization cost contributes just $O(\gamma NK)$ to the regret since each of $N$ agents runs DFS until the $K$ vertices have been visited.

\hfill

Combining these four steps gives the desired regret upper bound. Note that Steps 1 and 4 are asymptotically dominated by Steps 2 and 3 so that our final regret bound is the sum of the transition phase and exploitation phase regret.

\begin{remark}[Regret Consideration of Weighted Graphs] \label{rem:reg}
    The consequences of a weighted $G$ to the theoretical analysis would manifest in Steps 1, 3, and 4. In particular, the assumption that each time step incurs a worst-case regret of $N$ would need to be updated to reflect the transition costs potentially incurred by the agents. So long as these costs are bounded by some constant $B$, we conclude that at each time step the regret incurred is no more than $(1 + B)N$ which does not effect our asymptotic regret bound.
\end{remark}

\section{Numerical Simulations} \label{sec:numerical}

To test the performance of our model experimentally, we construct a synthetic multi-agent graph bandit problem with $K = 300$ arms and $N = 20$ agents. The graph, which we display in Figure \ref{fig:graph}, is initialized as an Erdos-Renyi graph with probability parameter 0.05. Each agent is assigned a random start vertex. The reward distribution for each arm is distributed as $\mathcal{N}(\mu, \sigma^2)$ with $\mu$ chosen uniformly from the interval $(0.25, 0.75)$ and $\sigma^2 = 0.06$. Motivated by the case of decreasing marginal rewards, for each arm $k \in [K]$, we also define the concave function 
\begin{equation*}
    f_k(x) = \frac{\log_{2+k} (\frac{x}{20} + \frac{1}{2+k}) + 1}{\log_{2+k} (\frac{1}{20} + \frac{1}{2+k}) + 1}.
\end{equation*}
We use a state-of-the-art optimization software, Gurobi, to solve the non-convex optimization problem given by (\ref{eq:maxcount}) \cite{gurobi}.

In addition to \texttt{Multi-G-UCB}, we define three benchmark algorithms. \texttt{Multi-G-UCB-median} is a variation where for each episode we sample the estimated optimal arm allocation until the arm with the median number of prior samples has its sample count doubled (rather than the minimum). \texttt{Multi-G-UCB-max} is defined similarly but uses the arm with the maximum number of prior samples as the baseline for each episode. Lastly, \texttt{Indv-G-UCB} is derived by having each agent individually perform the G-UCB algorithm of \cite{Zhang2023} without communication and coordination. Figure \ref{fig:cumreg} displays the cumulative regret of each algorithm as a function of time averaged over 10 independent trials each with time horizon $T = 1.5 \cdot 10^{5}$.

\begin{figure}[thpb]
    \begin{minipage}{0.2\textwidth}
      \centering
      \includegraphics[width =\linewidth]{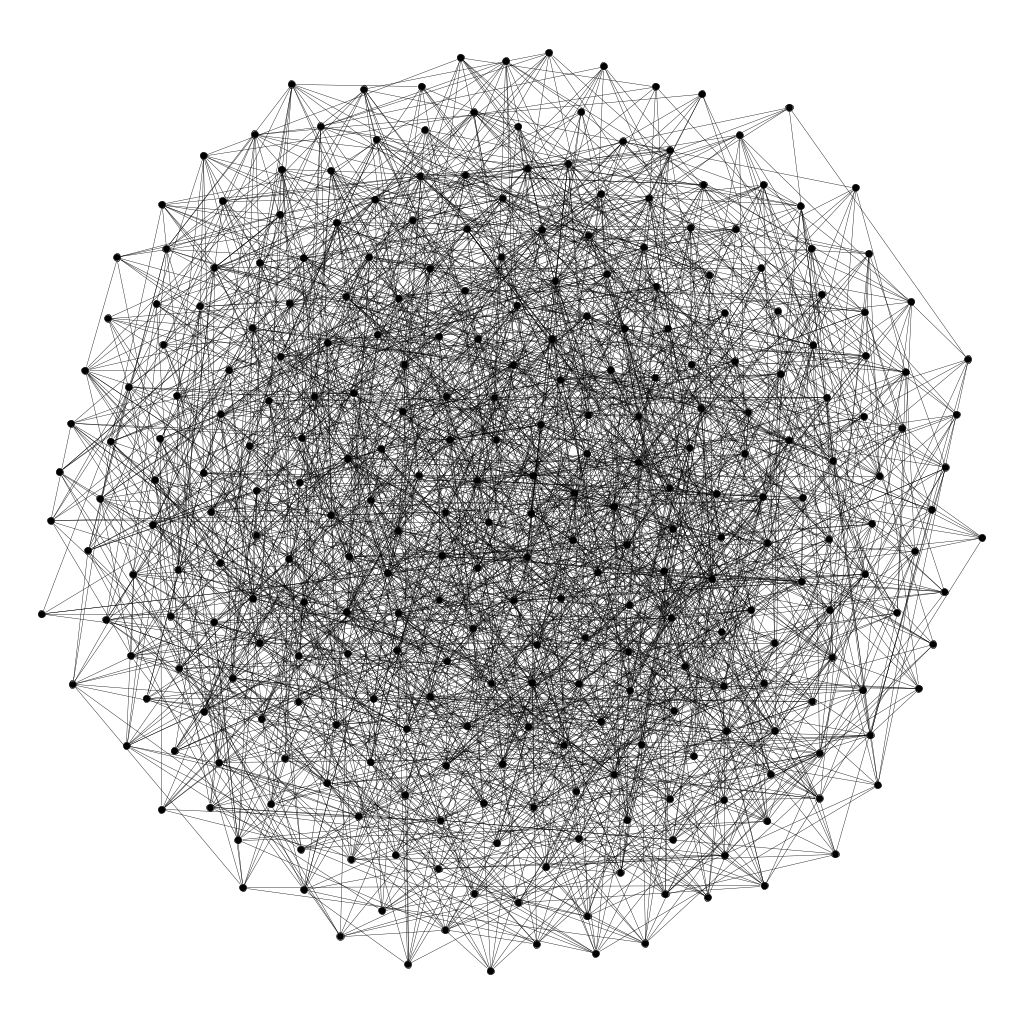}
      \caption{The graph $G$.}
      \label{fig:graph}
    \end{minipage} \hfill
    \begin{minipage}{0.28\textwidth}
      \centering
      \includegraphics[width = \linewidth]{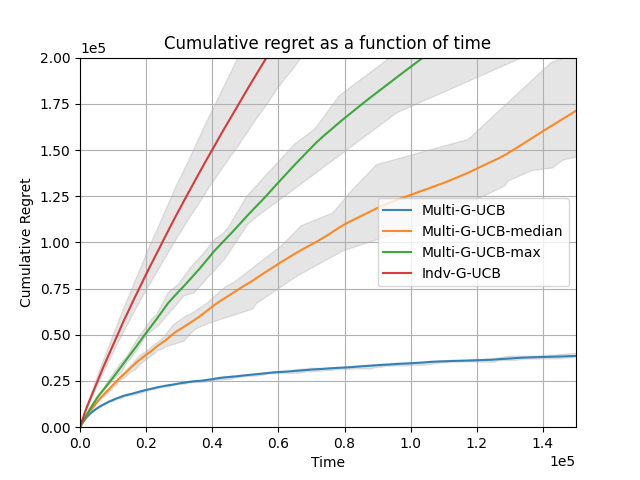}
      \caption{The average cumulative regret of each algorithm across 10 trials.}
      \label{fig:cumreg}
    \end{minipage}
\end{figure}

As expected, all three of the cooperative algorithms considerably outperform \texttt{Indv-G-UCB}. \texttt{Multi-G-UCB} also outperforms its two variations both of whom over-prioritize exploitation and under-prioritize exploration. For \texttt{Multi-G-UCB-max} in particular, once one arm has been shown to be part of the optimal set it becomes increasingly harder to find the other arms since the episode lengths continue to increase even when including relatively unknown arms in the destination allocation.


\section{CONCLUSIONS AND FUTURE WORK}

In this paper, we define the multi-agent graph bandit problem, propose a learning algorithm, \texttt{Multi-G-UCB},
provide a theoretical analysis of its regret and present experimental results that validate its performance. We believe that the formulation multi-agent graph bandit problem opens up many interesting future directions, such as extending our algorithm to accommodate decentralized learning scenarios, generalizing to more complicated case where rewards of arms are correlated across time and/or across arms, as well as performing more comprehensive theoretical studies such as analyzing instance-dependent regret.





\bibliographystyle{bibtex/IEEEtran.bst}
\bibliography{bibtex/IEEEabrv, bibtex/bib}



\section*{APPENDIX} \label{app:proof}
\section*{PROOF OF THEOREM \ref{th:reg}}

Before beginning the proof of Theorem \ref{th:reg}, we define a few relevant quantities. Let $s_{k,e}$ be the number of samples of arm $k$ in episode $e$; that is, $s_{k,e} = n_{k,t_{e+1}} - n_{k, t_{e}}$. Let $H_{e}$ be the length of episode $e$; that is $H_{e} = t_{e+1} - t_{e}$. Furthermore, let $\mathcal{R}_e$ be the total regret of episode $e$ and let $\mathcal{F}_e$ be the smallest $\sigma$-algebra such that $H_0, \dots, H_e$ are measureable.

We also derive a useful result about the number of episodes $E$ of the algorithm.

\hfill

\begin{lemma}[Number of episodes] \label{lem:numepisodes}
    The number of episodes $E$ is logarithmic in $T$. In particular,
    $$E \leq K\log(T).$$
\end{lemma}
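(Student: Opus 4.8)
\emph{Proof proposal.} The plan is to exploit the doubling structure of the algorithm directly: each episode is constructed to terminate exactly when the least-sampled destination arm $k_{min}$ has its sample count doubled, so I would charge each episode to this arm and then bound, for each fixed arm, the number of episodes in which it can serve as the doubling arm.

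First I would record two elementary facts. After the initialization phase every vertex has been visited, so $n_{k,t} \geq 1$ for all $k$ and all $t$ past initialization. Moreover, by the update rule for $n_{k,t}$ the count of any arm increases by at most one per time step, whence $n_{k,t} \leq t \leq T$; crucially, $n_{k,t}$ is nondecreasing in $t$. Both facts are immediate from the definitions, but the monotonicity is what makes the counting argument go through.

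Next, fix an arm $k$ and let $e_1 < e_2 < \cdots < e_m$ be the episodes in which $k$ plays the role of $k_{min}$. By the doubling rule governing the exploitation phase (line 9 of Algorithm \ref{alg:GcompUCB}), the count of $k$ doubles over each such episode, i.e. $n_{k, t_{e_j+1}} \geq 2\, n_{k, t_{e_j}}$. Since $n_{k,t}$ never decreases, the count at the start of $e_{j+1}$ is at least the count at the end of $e_j$, so iterating gives $n_{k,\cdot} \geq 2^{m}\, n_{k, t_{e_1}} \geq 2^{m}$ after all $m$ of these episodes. Combining with $n_{k,t} \leq T$ yields $2^{m} \leq T$, hence $m \leq \log_2 T$: each arm can be the doubling arm in at most $\log_2 T$ episodes. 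Finally, because every episode has exactly one doubling arm $k_{min}$, summing over the $K$ arms gives $E \leq K \log_2 T$, which is the asserted bound (with $\log$ read in base $2$).

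The step I expect to require the most care is the charging argument. The count $n_k$ may also grow in episodes where $k$ is a destination arm but not the least-sampled one, so one must check that this extra growth does not break the per-arm count of doublings. Monotonicity of $n_{k,t}$ resolves this: any intermediate increments can only accelerate the approach to the cap $T$, leaving the factor-of-two bookkeeping between consecutive "doubling episodes" of $k$ intact. Once this is pinned down, the remainder is routine.
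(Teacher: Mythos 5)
Your proof is correct and rests on the same core idea as the paper's: charge each episode to its designated doubling arm $k_{min}$, use monotonicity of $n_{k,t}$ to chain the doublings of a fixed arm across its (possibly non-consecutive) charged episodes, and cap the count by the horizon. Where you diverge is in the final counting step, and your route is actually simpler. The paper aggregates: it shows $n_{k,t_{e_{l(k)}+1}} \geq 2^{l(k)}$ per arm, sums to get $TK \geq \sum_{k} 2^{l(k)}$, and then applies Jensen's inequality ($\sum_k 2^{l(k)} \geq K 2^{\sum_k l(k)/K}$) to convert the sum of exponentials into a bound on $E-1 = \sum_k l(k)$. You instead observe that $2^{l(k)} \leq n_{k,\cdot} \leq T$ holds \emph{for each arm individually}, giving $l(k) \leq \log_2 T$ directly, and sum over the $K$ arms. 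Since the paper only uses the crude aggregate bound $\sum_k n_k \leq TK$, its Jensen step buys nothing over your per-arm argument here (it would only help with a genuinely tighter aggregate constraint such as $\sum_k n_k \leq NT$ with $N \ll K$), so your version is the more elementary derivation of the same bound. One small bookkeeping point you gloss over and the paper handles explicitly: the final episode may be truncated at the horizon $T$ before $k_{min}$'s count actually doubles, so your claim that every episode terminates exactly at a doubling fails for that one episode; this is why the paper bounds $E-1$ rather than $E$, and your argument strictly yields $E \leq K\log_2 T + 1$. This is an additive constant and does not affect the lemma's asymptotic content, but you should state it.
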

\begin{proof}
    Consider any arm $k \in [K]$. Denote by $E_k = \{e_1, e_2, \dots, e_{l(k)}\}$ the ordered set of episodes where $k$ is the destination node with the fewest number of samples and the algorithm is not terminated prematurely. That is, $E_k$ are the episodes whose length is defined such that $s_{k,e} > n_{k,t_{e}}$ and thus we have for each $i \in [2, \dots, l(k)]$
    $$n_{k,t_{e_i+1}} \geq 2n_{k, t_{e_i}} \geq 2n_{k, t_{e_{i-1}+1}}$$
    where the last inequality follows since in all episodes $e \notin E_k$, we still have $s_{k,e} \geq 0$. Note that $n_{k,t_{e_{i-1}+1}}$ is the count (number of samples) of arm $k$ immediately after episode $e_{i-1}$ and that for all episodes $e$ between $e_{i-1}$ and $e_i$, $e \notin E_k$.
    Therefore, 
    \begin{gather*}
        n_{k, t_{e_{i}+1}} \geq 2^{i-1} n_{k, t_{e_{1}+1}} \geq 2^i \\
        \therefore~n_{a, t_{e_{l(k)}+1}} \geq 2^{l(k)} 
    \end{gather*}
    where we used the fact that right after episode $e_1$ we have at least two samples of arm $k$ due to initialization. Every time step, we sample at most $\max(N, K)$ arms so by definition, we have
    \begin{align*}
        TK \geq \sum_{k \in [K]} n_{k, t_{e_{l(k)}+1}} 
        &\geq \sum_{k \in [K]} 2^{l(k)} \\ 
        &\geq K2^{\sum_{k \in [K]} \frac{l(k)}{K}} \geq K2^\frac{E-1}{K}
    \end{align*}
    where the second to last inequality is from Jensen's inequality. Note that we use $E-1$ rather than $E$ since the last episode is potentially cut short so as to not exceed the time limit $T$. Taking logs of both sides gives
    \begin{gather*}
        E-1 \leq K\log(T)/\log(2) \\
        \therefore~ E \leq K\log(T)
    \end{gather*}
    which completes the proof.
\end{proof}

\hfill

Now, we are ready to prove Theorem \ref{th:reg}. As discussed, the initialization phase contributes $O(\gamma NK)$ to the regret. We want to obtain a bound on
\begin{equation*}
    \mathbb{E}\left[\sum_{i=1}^T \mathcal{R}_t\right] = \mathbb{E}\left[\sum_{e=1}^E \mathcal{R}_e\right].
\end{equation*}
Let us now consider some arbitrary episode $e$. We decompose into good and bad events, where we define the ``good'' event as
\begin{equation*}
    \mathcal{G}_e = \{\forall k \in [K], \mu_k \in [\hat{\mu}_{k, t_e} - b_{a, t_e}, \hat{\mu}_{k, t_e} + b_{k, t_e}]\},
\end{equation*}
the event where the true mean of each arm is within the arm's confidence radius of its estimated mean. The ``bad'' event occurs when at least one confidence radius fails. Then, letting $\mathbb{E}_e = \mathbb{E}[\cdot|\mathcal{F}_e]$ and $\mathbb{P}_e = \mathbb{P}[\cdot|\mathcal{F}_e]$ we can write the regret of episode $e$ as
\begin{multline}
    \mathbb{E}\left[\mathcal{R}_e\right] =
    \mathbb{E}
    \biggl[
        \mathbb{E}_e[\mathcal{R}_e]
    \biggr] \\
     =
    \mathbb{E}
    \biggl[
    \mathbb{E}_e[\mathcal{R}_e|\neg\mathcal{G}_e]
    \mathbb{P}_e[\neg\mathcal{G}_e]
    \biggr]
    +
    \mathbb{E}
     \biggl[
     \mathbb{E}_e[\mathcal{R}_e|\mathcal{G}_e]
     \mathbb{P}_e[\mathcal{G}_e]
     \biggr].
    \label{eq:good_bad_decomp}
\end{multline}

\addtolength{\textheight}{-1cm}   

We begin with the first term, the price of failing confidence bounds. First, we note that
\begin{multline}
    \mathbb{E}[\mathcal{R}_e|\neg\mathcal{G}_e, \mathcal{F}_e]  =
    \mathbb{E}\left[\sum_{t = t_e}^{t_{e+1}-1} 
            \sum_{k \in [K]} [f_k(c_k^\ast)- f_k(c_{k, t})]\mu_k 
            \middle| \mathcal{F}_e\right]
            \\ 
    \leq \mathbb{E}\left[H_e \sum_{k \in [K]} f_{k}(c_k^\ast) \middle|\mathcal{F}_e \right]
    \leq \gamma NH_e,
    \label{eq:exp_bad_reg}
\end{multline}
since $\mathbb{E}[H_e|\mathcal{F}_e] = H_e$ and $\sum_{k \in [K]} f_k(c_k^\ast) \leq \gamma \sum_{k \in [K]} c_k^\ast = \gamma N$.
Now, we turn to bounding
\begin{align*}
    \mathbb{P}_e[\neg\mathcal{G}_e] &\leq 
    \sum_{k \in [K]} 
        \mathbb{P}_e\bigl[|\hat{\mu}_{k,t_e} - \mu_{k}| > b_{k, t_e}\bigr] \\
    &\leq \sum_{k\in [K]} \sum_{j=1}^{t_e-1} 
        \mathbb{P}_e\left[\bigl|\hat{\mu}_{k,t_e} - \mu_{k}\bigr| > \sqrt{\frac{2\log(t_e)}{n_{k, t_e}}}\middle| n_{k, t_e} = j \right],
\end{align*}

where in the first inequality we used a union bound over all possible arms, and in the second we used a union bound over all possible values of $n_{k, t_e}$ for every arm. Then, for any specific $k$ and $j$, we have by Hoeffding's inequality
\begin{equation*}
    \mathbb{P}_e\left[\bigl|\hat{\mu}_{k,t_e} - \mu_{k}\bigr| > \sqrt{\frac{2\log(t_e)}{j}}\right] \leq 2~\mathrm{exp}(-4\log t_e) = 2t_e^{-4}.
\end{equation*}
As a result, 
\begin{equation}
    \mathbb{P}_e[\neg\mathcal{G}_e] \leq 2Kt_e^{-3}.
    \label{eq:prob_bad_events}
\end{equation}
To combine (\ref{eq:exp_bad_reg}) and (\ref{eq:prob_bad_events}) we note that $K < t_e$ due to initialization, and that $H_e \leq K + t_e$ since it takes at most $K$ steps to transition to the optimal period after which the algorithm stays stationary for $n_{min} < t_e$ steps. Therefore,
\begin{equation}
    \mathbb{E}
    \biggl[
    \mathbb{E}_e[\mathcal{R}_e|\neg\mathcal{G}_e]
    \mathbb{P}_e[\neg\mathcal{G}_e]
    \biggr] =
    \mathbb{E}
    [2\gamma NH_eKt_e^{-3}] \leq \mathbb{E}[4\gamma Nt_e^{-1}].
    \label{eq:bad_events_contr}
\end{equation}

\hfill

We now offer an analysis of the second term of our good/bad decomposition in (\ref{eq:good_bad_decomp}). We bound $\mathbb{P}_e[\mathcal{G}_e] \leq 1$. Let $D_e$ be the number of transition steps in episode $e$ so that for $t \geq t_e + D_e$ we have $C_t = \hat{C}_{e}$. Note that $D_e \leq D$ by our assumption that each regret shortest path have maximum physical length of $D$. Then,
\begin{multline*}
    \mathbb{E}_e[\mathcal{R}_e|\mathcal{G}_e] =
    \mathbb{E}_e
    \left[
        \sum_{t = t_e}^{t_{e+1}-1} 
            \sum_{k \in [K]} [f_k(c_k^\ast) - f_k(c_{k,t})]\mu_k
    \middle| \mathcal{G}_e
    \right] 
    \\ \leq 
    \gamma DN + 
    \mathbb{E}_e\left[ 
    \sum_{t=t_e+D_e}^{t_{e+1}-1} \sum_{k \in [K]} 
        [f_k(c_k^\ast) - f_k(\hat{c}_{k, e})]\mu_k 
    \middle| 
    \mathcal{G}_e\right] ,
\end{multline*}
since after $D$ steps we will have reached the desired arm allocation represented by $\hat{C}_{e}$. We now note that by definition of $\mathcal{G}_e$, for every $k \in [K]$ we have 
\begin{equation*}
    U_{k, t_e} - 2b_{k, t_e} = \hat{\mu}_{k, t_e} - b_{k, t_e} \leq \mu_k \leq \hat{\mu}_{k, t_e} + b_{k, t_e} = U_{k, t_e}.
\end{equation*} 

Therefore, we can simplify the expectation above as
\begin{multline}
    \mathbb{E}_e\left[ 
    \sum_{t=t_e+D_e}^{t_{e+1}-1} \sum_{k \in [K]} 
        [f_k(c_k^\ast) - f_k(\hat{c}_{k, e})]\mu_k
    \middle| 
    \mathcal{G}_e\right] \\
    \leq 
    2\mathbb{E}_e\left[
    \sum_{t_e + D_e}^{t_{e+1}-1} \sum_{k \in [K]} 
        f_k(\hat{c}_{k, e}) b_{k, t_e} \middle| 
    \mathcal{G}_e\right]
    \\ + \mathbb{E}_e\left[
    \sum_{t_e + D_e}^{t_{e+1}-1} \sum_{k \in [K]}
        [f_k(c_{k}^\ast) - f_k(\hat{c}_{k, e})]U_{k, t_e}
    \middle| 
    \mathcal{G}_e\right] \\ 
    \leq 2\mathbb{E}_e\left[
    \sum_{t_e + D_e}^{t_{e+1}-1} \sum_{k \in [K]} 
        f_k(\hat{c}_{k, e}) b_{k, t_e} \middle| 
    \mathcal{G}_e\right],
    \label{eq:good_events_contr}
\end{multline}
where the last inequality follows from definition of $\hat{C}_{t_e}$ as the count vector that maximizes $\sum_{k \in K} f_k(c_k)U_{k, t_e}$.

\hfill

Combining (\ref{eq:bad_events_contr}) and (\ref{eq:good_events_contr}) and summing over all episodes thus gives an expected regret of
\begin{multline}
    \mathbb{E}[\sum_{t=1}^T \mathcal{R}_t] = 
    2\mathbb{E}\Biggl[\sum_{e=1}^E \left(\mathbb{E}\left[
    \sum_{t_e + D_e}^{t_{e+1}-1} \sum_{k \in [K]} 
        f_k(\hat{c}_{k, e}) b_{k, t_e} 
    \middle| 
    \mathcal{G}_e\right]\right)\Biggr]
    \\ + \mathbb{E}\left[\sum_{e=1}^E \gamma DN\right] 
    + \mathbb{E}\left[\sum_{e=1}^E 4\gamma Nt_e^{-1}\right].
    \label{eq:reg_in_3_terms}
\end{multline}

We bound each of the three terms in (\ref{eq:reg_in_3_terms}). We begin with the first term. We invoke Lemma 4 in \cite{Zhang2023} which states that for any random variable $x$ satisfying $x \geq 0$ almost surely, the equation
\begin{equation*}
    \mathbb{E}[x|\mathcal{G}_e] \leq 2\mathbb{E}[x]
\end{equation*}
holds for all episodes $e$. We can therefore get rid of the conditioning on $\mathcal{G}_e$. 
\begin{multline*}
     2\mathbb{E}\Biggl[\sum_{e=1}^E \left(\mathbb{E}\left[
    \sum_{t_e + D_e}^{t_{e+1}-1} \sum_{k \in [K]} 
        f_a(\hat{c}_{k, e}) b_{k, t_e} 
    \middle| 
    \mathcal{G}_e\right]\right)\Biggr] 
    \\ =  
    4\mathbb{E}
    \left[
    \sum_{e=1}^E \sum_{t=t_e+D_e}^{t_{e+1}-1} \sum_{k \in [K]}
        f_k(\hat{c}_{k, e}) b_{k, t_e}
    \right] 
    \\ \leq 
    4\sqrt{2\log(T)}\mathbb{E}\left[\sum_{e=1}^E 
        \sum_{t = t_e+D_e}^{t_{e+1}-1}
            \sum_{k \in [K]} \frac{f_k(\hat{c}_{k,e})}{\sqrt{n_{k,t_e}}}
    \right].
\end{multline*}
Now, recall that $n_{min}$ is the number of samples of $k_{min}$ at the start of episode $e$ where $k_{min}$ is the arm with the fewest observed samples such that $\hat{c}_{k, e} > 0$. Then, since we know $\sum_{k \in [K]} \hat{c}_{k,e} = N$ we have 
\begin{equation}
    \sum_{k \in [K]} \frac{f_k(\hat{c}_{k,e})}{\sqrt{n_{k,t_e}}}
    \leq \sum_{k \in [K]} \frac{\gamma \hat{c}_{k,e}}{\sqrt{n_{k,t_e}}} 
    \leq \frac{\gamma N}{\sqrt{n_{min}}}.
\end{equation}
Note now that by definition of our episode length we have at most $n_{min}$ time steps at the destination node since we end the episode exactly when $k_{min}$ has its number of samples doubled. Therefore,
\begin{multline*}
    4\sqrt{2\log(T)}\mathbb{E}\left[\sum_{e=1}^E 
    \sum_{t = t_e+D_e}^{t_{e+1}-1}
        \sum_{k \in [K]} \frac{f_k(\hat{c}_{k,e})}{\sqrt{n_{k,t_e}}}
    \right] 
    \\
    \leq 4\gamma \sqrt{2\log(T)}\mathbb{E}\left[\sum_{e=1}^E 
    n_{min} \left(\frac{N}{\sqrt{n_{min}}}\right)
    \right] 
    \\ \leq 
    4\gamma N\sqrt{2\log(T)}\mathbb{E}\left[\sum_{e=1}^E \sqrt{n_{min}}\right].
\end{multline*}

Considering now the expression still in the expectation and applying Jensen's law we get
$$E\sum_{e=1}^E \frac{\sqrt{n_{min}}}{E} \leq E\sqrt{\frac{\sum_{e=1}^E n_{min}}{E}} \leq \sqrt{TE} \leq \sqrt{TK\log(T)}.$$
Therefore, we achieve a final ``price of optimism'' of
\begin{equation}
    O\left(\gamma N\log(T)\sqrt{TK}\right).
    \label{eq:p_of_opt}
\end{equation}

The second term in (\ref{eq:reg_in_3_terms}) can be bounded using Lemma \ref{lem:numepisodes} achieving a final transition cost of
\begin{equation}
    O(\gamma NDK\log(T)).
    \label{eq:c_of_dest}
\end{equation}

Finally, the third term can be bounded by noting that $t_{e} \geq e + 1$ for each $e$. Then,
\begin{multline}
    \mathbb{E}\left[\sum_{e=1}^E 4\gamma Nt_{e}^{-1} \right]
    \leq 4\gamma N\mathbb{E}\left[\sum_{e=1}^E \frac{1}{e} \right] 
    \\ \leq 4\gamma N\mathbb{E}[\log(E)]
    \leq O(\gamma N [\log(K) + \log\log(T)])
    \label{eq:cost_of_bad}
\end{multline}
where in the second inequality we used the bound on harmonic sums. Therefore, by combining (\ref{eq:p_of_opt}), (\ref{eq:c_of_dest}), and (\ref{eq:cost_of_bad}) and noting the contribution of the initialization phase is dominated, we bound our total regret contribution
\begin{equation}
    \mathbb{E}\left[\sum_{t=1}^T \mathcal{R}_t \right] \leq O\left(\gamma N\log(T)\left[\sqrt{TK} + DK\right]\right).
\end{equation}

\end{document}